
\documentclass[letterpaper, 10 pt, conference]{ieeeconf}  

\IEEEoverridecommandlockouts                              
\overrideIEEEmargins

\usepackage{graphics} 
\usepackage{epsfig} 
\usepackage{mathptmx} 
\usepackage{times} 
\usepackage{amsmath} 
\usepackage{amssymb}  

\usepackage{mathtools}

\usepackage{algorithm}
\usepackage{algorithmic}
\usepackage{xcolor}
\usepackage{hyperref}

\usepackage{color}

\setlength{\parindent}{0em}
\setlength{\parskip}{0.7em}
\usepackage{blindtext}
\usepackage{enumerate}
\usepackage{graphicx}
\usepackage{amsfonts, amsmath, bm, amssymb}
\usepackage{dsfont}
\usepackage{wrapfig}
\usepackage{subcaption}


%


%




\usepackage{pifont}
%
%





\usepackage{xspace}
\makeatletter
\DeclareRobustCommand\onedot{\futurelet\@let@token\@onedot}
\def\@onedot{\ifx\@let@token.\else.\null\fi\xspace}

\makeatother


\newcommand{\Dc}{\mathcal{D}}

\newcommand{\Lc}{\mathcal{L}}
\newcommand{\Mc}{\mathcal{M}}
\newcommand{\Nc}{\mathcal{N}}

\newcommand{\Rc}{\mathcal{R}}



\newcommand{\Eb}{\mathbb{E}}

\newcommand{\Rb}{\mathbb{R}}


\newcommand{\gv}{\mathbf{g}}

\newcommand{\vv}{\mathbf{v}}
\newcommand{\wv}{\mathbf{w}}
\newcommand{\xv}{\mathbf{x}}
\newcommand{\yv}{\mathbf{y}}
\newcommand{\zv}{\mathbf{z}}




\ifx\BlackBox\undefined
\newcommand{\BlackBox}{\rule{1.5ex}{1.5ex}}  
\fi
\ifx\QED\undefined
\def\QED{~\rule[-1pt]{5pt}{5pt}\par\medskip}
\fi
\ifx\proof\undefined
\newenvironment{proof}{\par\noindent{\em Proof:\ }}{\hfill\BlackBox\\}
\fi
\ifx\theorem\undefined
\newtheorem{theorem}{Theorem}
\fi
\ifx\example\undefined

\fi
\ifx\property\undefined

\fi
\ifx\lemma\undefined
\newtheorem{lemma}{Lemma}
\fi
\ifx\proposition\undefined
\newtheorem{proposition}{Proposition}
\fi
\ifx\fact\undefined

\fi
\ifx\remark\undefined
\newtheorem{remark}{Remark}
\fi
\ifx\corollary\undefined

\fi
\ifx\definition\undefined
\newtheorem{definition}{Definition}
\fi
\ifx\conjecture\undefined

\fi
\ifx\axiom\undefined

\fi
\ifx\claim\undefined

\fi
\ifx\assumption\undefined
\newtheorem{assumption}{Assumption}
\fi
\ifx\question\undefined

\fi
\ifx\problem\undefined

\fi

\newcommand{\clip}{\text{clip}}

\title{\LARGE \bf
Almost Sure Convergence Analysis of Differentially Private Stochastic Gradient Methods
}


\author{Amartya Mukherjee and Jun Liu
\thanks{Amartya Mukherjee and Jun Liu are with the Department of Applied Mathematics, University of Waterloo, Waterloo, Ontario, Canada N2L 3G1 (email: {\tt\small (a29mukhe,j.liu)@uwaterloo.ca}).}%
}

\begin{document}

\maketitle

\begin{abstract}
    Differentially private stochastic gradient descent (DP-SGD) has become the standard algorithm for training machine learning models with rigorous privacy guarantees. Despite its widespread use, the theoretical understanding of its long-run behavior remains limited: existing analyses typically establish convergence in expectation or with high probability, but do not address the almost sure convergence of single trajectories. In this work, we prove that DP-SGD converges almost surely under standard smoothness assumptions, both in nonconvex and strongly convex settings, provided the step sizes satisfy some standard decaying conditions. Our analysis extends to momentum variants such as the stochastic heavy ball (DP-SHB) and Nesterov’s accelerated gradient (DP-NAG), where we show that careful energy constructions yield similar guarantees. 
    These results provide stronger theoretical foundations for differentially private optimization and suggest that, despite privacy-induced distortions, the algorithm remains pathwise stable in both convex and nonconvex regimes.
\end{abstract}

\section{Introduction}

In the training of machine learning models, maintaining the privacy of training data is of paramount importance. Particularly in domains such as health and finance, it is important that an adversary cannot reconstruct training data from the trained models. Unfortunately, generative models risk overfitting to their training data, thus generating data indistinguishable from the training set and compromising the privacy of users. For example, the work of \cite{chen2023pathway} reviews risks of leaking training data in image and text generation models.

To protect the privacy of training data, a mathematically rigorous framework for privacy titled differential privacy (DP \cite{dwork2006calibrating}) has gained interest in recent years. Differentially private stochastic gradient descent (DP-SGD \cite{abadi2016deep}) is a modification of stochastic gradient descent (SGD) that offers privacy guarantees by introducing gradient clipping and noise injection. While DP guarantees strong privacy protection, it often comes at the cost of slower convergence and degraded model utility due to the bias induced by clipping and the noise injection. This paper will analyze the convergence rates of various DP-SGD methods under different noise injection schemes and dataset conditions. We explore how gradient clipping and noise scaling affect model performance.

Since the advent of DP-SGD, its convergence analysis has been of interest to both the machine learning and control communities. Recent literature has focused on the optimization and generalization trade-offs introduced by differential privacy, but not on its almost-sure stability. For example, the work of \cite{fang2023improved} provides convergence analysis on a modified DP-SGD that replaces gradient clipping with an affine function of the gradient of the objective function.
The authors of \cite{tang2024dp} extend the analysis to momentum-based variants and show that the additive Gaussian noise used for privacy can dominate the second-moment estimates in adaptive methods like Adam, effectively neutralizing their curvature adaptation and creating severe ill-conditioning under heavy-tailed data distributions. They demonstrate that bias-corrected DP-Adam (DP-AdamBC) mitigates this issue by subtracting the variance of the DP noise, improving convergence on imbalanced datasets.
The comprehensive work of \cite{koloskova2023revisiting} derives convergence rates of SGD with clipping in deterministic and stochastic settings.
Lastly, DP has also been studied in distributed optimization settings \cite{huang2024differential}, symbolic systems \cite{chen2023differential}, and multi-agent systems \cite{katewa2019differential}.

Overall, most existing analyses provide convergence guarantees only in expectation or with high probability, leaving open the question of whether individual trajectories stabilize. This gap is critical, since practical deployments of DP-SGD often train for many epochs under noisy, biased gradient updates introduced by clipping and Gaussian perturbation.
Our paper builds upon these works by providing almost sure convergence guarantees for DP-SGD in convex and non-convex settings. We approach this by proving that a weighted average of the norm of the gradient of the objective function converges almost surely, and therefore that the best iterate converges.
We extend our analysis further to variants of SGD that include momentum, where we also provide almost sure convergence guarantees of the last iterates.

\section{Preliminaries and Assumptions}

We provide a formal definitions of DP and SGD, and introduce some assumptions commonly used in the convergence analysis of SGD \cite{liu2022almost,doan2022finite}.

\begin{definition}[Differential Privacy (DP) \cite{dwork2006calibrating}]
    A randomized mechanism $\Mc:\Dc\to\Rc$ with domain $\Dc$ and range $\Rc$ satisfies $(\epsilon,\delta)$ differential privacy, where $\epsilon>0,\delta>0$, if for any datasets $d,d'\subset\Dc$ differing by at most one entry, and for any subset of outputs $S\subset\Rc$, it holds that
    \begin{equation}
        P(\Mc(d)\in S)\leq e^{\epsilon}P(\Mc(d')\in S)+\delta.
    \end{equation}
\end{definition}

\textbf{Problem statement:} We are interested in solving the following unconstrained minimization problem
\begin{equation}
    \min_{\xv\in\Rb^d}f(\xv),
\end{equation}
where $f:\Rb^d\to\Rb$, using stochastic gradient methods that satisfy $(\epsilon,\delta)$-DP.
Let $f^*$ be the true minimum. In convex settings, we want to prove that $f(\xv_t)-f^*\to 0$ as $t\to\infty$. In non-convex settings, we want to prove that $\nabla f(\xv_t)\to 0$ as $t\to\infty$.

\begin{definition}[Stochastic Gradient Descent (SGD)]
    The iteration of SGD is given by
    \begin{equation}
        \xv_{t+1}=\xv_t-\alpha_t \gv_t,
    \end{equation}
    where $\gv_t=\nabla f(\xv;\xi_t)$ is the stochastic gradient at $\xv_t$ with a random process $\xi_t$ and $\alpha_t$ is a step size. Throughout this paper, we denote $\nabla f(\xv_t):=\Eb[\gv_t]$ as the expectation of the stochastic gradient over all $\xi_t$.
\end{definition}

\begin{definition}[Differentially Private SGD (DP-SGD) \cite{abadi2016deep}]\label{def:DP_SGD}
    DP-SGD is a modification of SGD, where gradients are clipped and noise is added to the clipped gradients.
    \begin{equation}\label{eq:DP_SGD}
        \xv_{t+1}=\xv_t-\alpha_t \gv_t^{DP},
    \end{equation}
    where the differentially private stochastic gradient $\gv_t^{DP}$ is given by
    \begin{equation}
        \gv_t^{DP}=\clip_q(\nabla f(\xv_t;\xi_t))+q\zeta_t,
    \end{equation}
    where the $\clip_q$ function is defined for $q>0$ by
    \begin{equation}
        \clip_q(\nabla f(\xv_t;\xi_t))=\min\left(1,\frac{q}{\|\nabla f(\xv_t;\xi_t)\|}\right)\nabla f(\xv_t;\xi_t),
    \end{equation}
    where $\zeta_t\sim\Nc(0,\sigma_{DP}^2I)$, and $\|\cdot\|$ denotes the 2-norm.
\end{definition}

\begin{remark}
    DP-SGD can be adapted to other stochastic gradient methods such as stochastic heavy ball and stochastic Nesterov accelerated gradient.
    A DP-SGD update satisfies $(\epsilon,\delta)$-DP if $\sigma_{DP}^2$ and $q$ satisfy $\sigma_{DP}^2>2\log(1.25/\delta)q^2/\epsilon^2$.
\end{remark}

Based on the formulation of DP-SGD, we also introduce the notion of clipping probability.

\begin{definition}[Clipping Probability]\label{def:clipping}
    Define the process $\eta_t$ as the clipping probability
    \begin{equation}
        \eta_t=P(\|\nabla f(\xv_t,\xi_t)\|>q|\xv_t).
    \end{equation}
\end{definition}

We make the following assumptions that are commonly used in the SGD literature \cite{nesterov2004introductory}.

\begin{assumption}[$L$-smoothness]\label{as:L_smoothness}
    $f$ is bounded from below by $f^*:=\inf_{\xv\in\Rb^d}f(\xv)$ and its gradient $\nabla f$ is $L$-Lipschitz i.e. $\|\nabla f(\xv)-\nabla f(\yv)\|\leq L\|\xv-\yv\|$, for all $\xv,\yv\in\Rb^d$.
\end{assumption}

\begin{assumption}[$\mu$-strongly convex]\label{as:mu_strongly_convex}
    There exists a positive constant $\mu>0$ such that
    \begin{equation}
        f(\yv)\geq f(\xv)+\langle\nabla f(\xv),\yv-\xv\rangle+\frac{\mu}{2}\|\yv-\xv\|^2,
    \end{equation}
    for all $\xv,\yv\in\Rb^d$
    A consequence of $f$ being $\mu$-strongly convex is that
    \begin{equation}
        \frac{1}{2\mu}\|\nabla f(\xv)\|^2\geq f(\xv)-f^*.
    \end{equation}
\end{assumption}



\begin{assumption}[Directional Invariance]\label{as:direction}
    There exists a constant $D>0$ such that
    \begin{equation}
        \Eb\left[\langle\nabla f(\xv_t),\frac{\nabla f(\xv_t;\xi)}{\|\nabla f(\xv_t;\xi)\|}\rangle\mid\xv_t\right]\geq D\|\nabla f(\xv_t)\|
    \end{equation}
    holds whenever $\nabla f(\xv_t)\neq 0$.
\end{assumption}

\begin{remark}
    This assumption essentially states that the direction of $\nabla f(\xv;\xi)$ is preserved if we normalize it, and that the distribution of $\nabla f(\xv;\xi)$ is not extremely skewed.
\end{remark}

\section{Background and Lemmas on Supermartingales}

The analysis in this paper follows from the following result derived in \cite{robbins1971convergence}. From this section onward, we use the shorthand notation $\Eb_t[\cdot]:=\Eb[\cdot|\xv_t]$.

\begin{proposition}\label{prop:1}
    Let $\{X_t\}$, $\{Y_t\}$, and $\{Z_t\}$ be three sequences of random variables that are adapted to a filtration $\{\mathcal{F}_t\}$. Let $\{\gamma_t\}$ be a sequence of nonnegative real numbers such that $\prod_{t=1}^{\infty} (1 + \gamma_t) < \infty$. Suppose that the following conditions hold:
    \begin{enumerate}
        \item $X_t, Y_t, Z_t$ are nonnegative for all $t \geq 1$.
        \item $\mathbb{E}[Y_{t+1} | \mathcal{F}_t] \leq (1 + \gamma_t) Y_t - X_t + Z_t$ for all $t \geq 1$.
        \item $\sum_{t=1}^{\infty} Z_t < \infty$ holds almost surely.
    \end{enumerate}
    Then, we have
    \begin{equation}
        \sum_{t=1}^{\infty} X_t < \infty \quad \text{almost surely,}
    \end{equation}
    and $Y_t$ converges almost surely.
\end{proposition}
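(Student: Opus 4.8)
The plan is to recognize this statement as the Robbins--Siegmund convergence theorem for nonnegative almost-supermartingales and to prove it in three stages: first a deterministic rescaling that absorbs the factors $1+\gamma_t$ and reduces everything to the case $\gamma_t\equiv0$; then the construction of an honest supermartingale from condition~2, which is only bounded below in an almost-sure (not uniform) sense and is therefore made convergent via a stopping-time localization together with Doob's nonnegative supermartingale convergence theorem; and finally the transfer of both conclusions back through the rescaling while the localization level is sent to infinity. I expect the only real obstacle to be that $\sum_tZ_t<\infty$ is assumed only almost surely rather than in expectation, so one cannot directly manipulate an integrable object such as $Y_t+\sum_{s\ge t}Z_s$; the localization in the second stage is precisely what repairs this, after which the remaining steps are routine.

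For the first stage, I would set $\beta_1=1$ and $\beta_t=\prod_{s=1}^{t-1}(1+\gamma_s)$ for $t\ge2$. Since $\gamma_s\ge0$ and $\prod_{s\ge1}(1+\gamma_s)<\infty$, the sequence $\beta_t$ is nondecreasing with a finite limit $\beta_\infty\in[1,\infty)$. Dividing condition~2 by $\beta_{t+1}=\beta_t(1+\gamma_t)$ and writing $\widehat Y_t=Y_t/\beta_t$, $\widehat X_t=X_t/\beta_{t+1}$, $\widehat Z_t=Z_t/\beta_{t+1}$ gives $\mathbb{E}[\widehat Y_{t+1}\mid\mathcal F_t]\le\widehat Y_t-\widehat X_t+\widehat Z_t$ with all three sequences nonnegative and $\sum_t\widehat Z_t\le\sum_tZ_t<\infty$ almost surely. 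Because $1\le\beta_t\le\beta_\infty<\infty$, proving $\sum_t\widehat X_t<\infty$ almost surely and almost-sure convergence of $\widehat Y_t$ is equivalent to the claimed conclusions for $X_t$ and $Y_t$; so from now on I assume $\gamma_t\equiv0$.

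For the second stage, I would define $M_t=Y_t+\sum_{s=1}^{t-1}(X_s-Z_s)$, which is $\mathcal F_t$-adapted, its integrability following from that of the three sequences (implicit in condition~2 and in $Y_1\in L^1$). Since $\sum_{s=1}^{t}(X_s-Z_s)$ is $\mathcal F_t$-measurable, condition~2 gives $\mathbb{E}[M_{t+1}\mid\mathcal F_t]\le(Y_t-X_t+Z_t)+\sum_{s=1}^{t}(X_s-Z_s)=M_t$, so $M_t$ is a supermartingale, bounded below only by $-\sum_{s\ge1}Z_s$. For each integer $a\ge1$ I introduce the stopping time $\tau_a=\inf\{t\ge1:\sum_{s=1}^{t}Z_s>a\}$ (a stopping time because $\sum_{s=1}^{t}Z_s$ is $\mathcal F_t$-measurable). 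For every $t$ one has $\sum_{s=1}^{(t\wedge\tau_a)-1}Z_s\le a$, hence $M_{t\wedge\tau_a}\ge-a$, so $M_{t\wedge\tau_a}+a$ is a nonnegative supermartingale and converges almost surely by Doob's theorem.

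For the third stage, note that on the event $\{\tau_a=\infty\}$ we have $M_{t\wedge\tau_a}=M_t$ for all $t$, so $M_t$ converges, and there $\sum_{s\ge1}Z_s\le a<\infty$, so $\sum_{s=1}^{t-1}Z_s$ converges; hence $Y_t+\sum_{s=1}^{t-1}X_s=M_t+\sum_{s=1}^{t-1}Z_s$ converges too. Since $Y_t\ge0$ and $t\mapsto\sum_{s=1}^{t-1}X_s$ is nondecreasing, boundedness of this sum forces $\sum_{s=1}^{\infty}X_s<\infty$, and then $Y_t$ converges as a difference of two convergent sequences --- all on $\{\tau_a=\infty\}$. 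Finally $\{\tau_a=\infty\}=\{\sum_{s\ge1}Z_s\le a\}$ increases, as $a\to\infty$, to $\{\sum_{s\ge1}Z_s<\infty\}$, which has probability one by condition~3; so both conclusions hold almost surely, and reversing the first-stage rescaling (using $1\le\beta_t\le\beta_\infty$ and almost-sure convergence of $\beta_t$) finishes the proof.
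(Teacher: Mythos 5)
Your proof is correct. The paper does not actually prove this proposition --- it is quoted from Robbins and Siegmund's 1971 paper --- so there is no in-paper argument to compare against; your three-stage argument (rescale by $\beta_t=\prod_{s<t}(1+\gamma_s)$ to reduce to $\gamma_t\equiv 0$, form the supermartingale $M_t=Y_t+\sum_{s<t}(X_s-Z_s)$, and localize with $\tau_a=\inf\{t:\sum_{s\le t}Z_s>a\}$ so that $M_{t\wedge\tau_a}+a$ is a nonnegative supermartingale to which Doob's convergence theorem applies) is precisely the classical Robbins--Siegmund proof, and every step checks out, including the correct observation that the almost-sure (rather than $L^1$) summability of $Z_t$ is what forces the stopping-time localization. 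The only point worth making explicit, which you already flag, is that condition~2 implicitly presupposes the $Y_t$ (hence $X_t$, $Z_t$) are integrable so that $M_t$ is a genuine supermartingale; this is standard in statements of the result and harmless here.
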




The following result from \cite{liu2022almost} is used for convergence results in non-convex settings.

\begin{lemma}[Lemma 2 of \cite{liu2022almost}]\label{lem:jy}
Let $\{X_t\}$ be a sequence of nonnegative real numbers and $\{\alpha_t\}$ be a decreasing sequence of positive real numbers such that the following conditions hold:
\begin{equation}
    \sum_{t=1}^{\infty} \alpha_t X_t < \infty, \quad \sum_{t=1}^{\infty} \frac{\alpha_t}{\sum_{i=1}^{t-1} \alpha_i} = \infty.
\end{equation}
Then,
\begin{equation}
\min_{1 \leq i \leq t} X_i = o \left( \frac{1}{\sum_{i=1}^{t-1} \alpha_i} \right).
\end{equation}
\end{lemma}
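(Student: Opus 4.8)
The plan is to run a geometric (dyadic) blocking argument, in the same spirit as the classical fact that a convergent series with non-increasing terms $a_n$ obeys $n a_n \to 0$. Write $S_t := \sum_{i=1}^{t}\alpha_i$ and $m_t := \min_{1\le i\le t}X_i$. Since each $X_i \ge 0$, the sequence $m_t$ is nonnegative and non-increasing, and from $m_t \le X_t$ together with the first hypothesis I immediately get $\sum_{t}\alpha_t m_t \le \sum_t \alpha_t X_t < \infty$. The target is then to prove $S_{t-1} m_t \to 0$.

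First I would note that the second hypothesis forces $S_t \to \infty$: the partial sums $S_t$ are increasing, so if they were bounded with limit $S_\infty < \infty$ then $\alpha_t/S_{t-1} \le \alpha_t / S_1$ for $t \ge 2$, making $\sum_t \alpha_t/S_{t-1}$ convergent, a contradiction. (The $t=1$ term, where $S_0 = 0$, is excluded by the usual convention of starting the sum at the first index with $S_{t-1}>0$.) Next, for each large $t$ let $\tau(t)$ be the largest index strictly below $t$ with $S_{\tau(t)-1} \le \tfrac12 S_{t-1}$; such an index exists since $S$ is increasing and $S_{t-1}>0$, and $\tau(t)\to\infty$ because $S_{\tau(t)} > \tfrac12 S_{t-1}\to\infty$.

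Now I would exploit monotonicity of $m$ on the block $[\tau(t),t]$:
\[
\tfrac12 S_{t-1}\, m_t \;\le\; \big(S_t - S_{\tau(t)-1}\big)\, m_t \;=\; m_t \sum_{s=\tau(t)}^{t}\alpha_s \;\le\; \sum_{s=\tau(t)}^{t}\alpha_s m_s \;\le\; \sum_{s \ge \tau(t)}\alpha_s m_s ,
\]
where the first inequality uses $S_t \ge S_{t-1}$ and $S_{\tau(t)-1}\le\tfrac12 S_{t-1}$, and the third uses $m_s \ge m_t$ for $s \le t$. Since $\sum_s \alpha_s m_s$ converges and $\tau(t)\to\infty$, the tail on the right tends to $0$ by the Cauchy criterion, so $S_{t-1} m_t \to 0$, that is, $\min_{1\le i\le t}X_i = o\!\left(1/S_{t-1}\right)$.

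I expect the only genuine difficulty to be that $S_{t-1}m_t$ is itself not monotone — $S_{t-1}$ grows while $m_t$ decays — so one cannot just read off the limit; the choice of the cut-point $\tau(t)$ that roughly halves the partial sum $S$ is precisely the device that converts summability of $\alpha_t m_t$ into decay of $S_{t-1}m_t$. Everything else is bookkeeping: the degenerate case $m_t = 0$ (for which the claim holds trivially from that index on) and the indexing convention near $t=1$ each need a sentence but do not affect the core argument. It is worth remarking that monotonicity of $\{\alpha_t\}$ is not actually used in this particular statement, only nonnegativity of the $X_t$ and divergence of $\sum_t \alpha_t / S_{t-1}$.
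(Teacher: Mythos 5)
Your proof is correct. Note that the paper itself does not prove this lemma --- it imports it verbatim as Lemma~2 of the cited reference \cite{liu2022almost} --- so there is no in-paper argument to compare against; measured against the standard proof in that reference, your route is a genuine variant. The usual argument first shows $m_t:=\min_{1\le i\le t}X_i\to 0$ (otherwise $\sum_t\alpha_t m_t\ge m_\infty\sum_t\alpha_t=\infty$) and then splits $S_{t-1}m_t\le S_{t_0}m_t+\sum_{i>t_0}\alpha_i m_i$, sending $t\to\infty$ and then $t_0\to\infty$. Your halving cut-point $\tau(t)$ with $S_{\tau(t)-1}\le\tfrac12 S_{t-1}$ packages the same tail estimate into a single display and avoids the double limit; the chain
\begin{equation*}
\tfrac12 S_{t-1}m_t\le\bigl(S_t-S_{\tau(t)-1}\bigr)m_t\le\sum_{s\ge\tau(t)}\alpha_s m_s
\end{equation*}
is airtight, and your preliminary observations (divergence of $\sum_t\alpha_t/S_{t-1}$ forces $S_t\to\infty$, and $\tau(t)\to\infty$ because $S_{\tau(t)}>\tfrac12 S_{t-1}$) are both justified. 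Two cosmetic points: existence of $\tau(t)$ is most cleanly seen from $S_0=0\le\tfrac12 S_{t-1}$ rather than from monotonicity of $S$; and your closing remark is accurate --- the argument never compares $\alpha_s$ with $\alpha_t$, so the assumed monotonicity of $\{\alpha_t\}$ is indeed superfluous here, with the second hypothesis entering only through $S_t\to\infty$.
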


We derive some properties of the differentially private stochastic gradient that aids in our core theory.

\begin{proposition}\label{prop:dp_expectations}
    For all $\xv_t$ and $\xi_t$,
    \begin{align}
        \Eb_t\|\gv_t^{DP}\|^2&\leq q^2+q^2d\sigma_{DP}^2.
    \end{align}
    Furthermore, if Assumption \ref{as:direction} holds, then
    \begin{align}
        -\Eb_t[\langle\nabla f(\xv_t),\gv_t^{DP}\rangle]
        \leq& -(1-\eta_t)\|\nabla f(\xv_t)\|^2-D\eta_tq\|\nabla f(\xv_t)\|.
    \end{align}
\end{proposition}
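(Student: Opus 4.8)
\emph{Proof idea.}\ The plan is to write $\gv_t^{DP}=\clip_q(\nabla f(\xv_t;\xi_t))+q\zeta_t$ as a deterministically norm-bounded term plus an independent, mean-zero Gaussian, and to exploit three elementary facts: $\|\clip_q(v)\|=\min(\|v\|,q)\le q$ for every $v\in\Rb^d$; $\zeta_t\sim\Nc(0,\sigma_{DP}^2 I)$ is independent of $(\xv_t,\xi_t)$ with $\Eb[\zeta_t]=0$; and $\Eb_t[\nabla f(\xv_t;\xi_t)]=\nabla f(\xv_t)$.

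For the second-moment bound I would expand the square:
\begin{align*}
    \Eb_t\|\gv_t^{DP}\|^2 ={}& \Eb_t\|\clip_q(\nabla f(\xv_t;\xi_t))\|^2 \\
    &{}+ 2q\,\Eb_t\inner{\clip_q(\nabla f(\xv_t;\xi_t))}{\zeta_t} + q^2\,\Eb_t\|\zeta_t\|^2 .
\end{align*}
The middle term vanishes (condition on $\xi_t$ inside $\Eb_t$ and pull out $\Eb[\zeta_t]=0$, using independence of $\zeta_t$ from $\xi_t$). The first term is at most $q^2$ by the pointwise clipping bound, and $\Eb_t\|\zeta_t\|^2=d\sigma_{DP}^2$ since $\zeta_t$ has $d$ independent coordinates of variance $\sigma_{DP}^2$. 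Summing the three contributions gives $\Eb_t\|\gv_t^{DP}\|^2\le q^2+q^2 d\sigma_{DP}^2$, the first claim.

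For the inner-product estimate, the Gaussian part again contributes nothing: $\Eb_t\inner{\nabla f(\xv_t)}{q\zeta_t}=q\inner{\nabla f(\xv_t)}{\Eb[\zeta_t]}=0$ because $\zeta_t$ is independent of $\xv_t$ with mean zero. It then remains to lower bound $\Eb_t\inner{\nabla f(\xv_t)}{\clip_q(\nabla f(\xv_t;\xi_t))}$, and I would do so by splitting according to the clipping event $C_t=\{\|\nabla f(\xv_t;\xi_t)\|>q\}$, which carries $\Eb_t$-probability $\eta_t$ by Definition \ref{def:clipping}. On $C_t^c$ clipping is inactive, $\clip_q(\nabla f(\xv_t;\xi_t))=\nabla f(\xv_t;\xi_t)$, and the target lower bound for this piece is $(1-\eta_t)\|\nabla f(\xv_t)\|^2$, to be extracted from the unbiasedness $\Eb_t[\nabla f(\xv_t;\xi_t)]=\nabla f(\xv_t)$. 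On $C_t$ we have $\clip_q(\nabla f(\xv_t;\xi_t))=q\,\nabla f(\xv_t;\xi_t)/\|\nabla f(\xv_t;\xi_t)\|$, and Assumption \ref{as:direction} supplies a lower bound of $D\eta_t q\|\nabla f(\xv_t)\|$ for this piece. Adding the two and negating yields the stated upper bound on $-\Eb_t\inner{\nabla f(\xv_t)}{\gv_t^{DP}}$.

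The step I expect to demand the most care is the bookkeeping of the clipping indicator inside these conditional expectations: the conditional distribution of $\nabla f(\xv_t;\xi_t)$ given $C_t^c$ is not the unconditional one, so the no-clip piece must instead be handled through the identity $\Eb_t[\inner{\nabla f(\xv_t)}{\nabla f(\xv_t;\xi_t)}\mathds{1}_{C_t^c}]=\|\nabla f(\xv_t)\|^2-\Eb_t[\inner{\nabla f(\xv_t)}{\nabla f(\xv_t;\xi_t)}\mathds{1}_{C_t}]$, and the clip-event remainder appearing here has to be reconciled with the gain term $q\,\Eb_t[\inner{\nabla f(\xv_t)}{\nabla f(\xv_t;\xi_t)}\,\|\nabla f(\xv_t;\xi_t)\|^{-1}\mathds{1}_{C_t}]$. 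This is precisely where Assumption \ref{as:direction} enters, applied in an event-restricted form on $C_t$ so that the probability factor $\eta_t$ is produced; establishing that event-restricted version (or reading Assumption \ref{as:direction} as already providing it) is the real content, after which the remaining steps are routine algebra.
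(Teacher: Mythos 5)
Your proof of the second-moment bound is exactly the paper's: expand the square, kill the cross term using independence and zero mean of $\zeta_t$, bound $\|\clip_q(\cdot)\|\le q$ pointwise, and compute $\Eb\|q\zeta_t\|^2=q^2d\sigma_{DP}^2$. For the inner-product bound you also follow the paper's route: drop the Gaussian contribution, split on the clipping event, and invoke Assumption \ref{as:direction} on the clipped piece. So the overall approach is the same.

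The step you flag as delicate --- the bookkeeping of the clipping indicator --- is a real issue, and you should know that the paper's own proof does not resolve it: it simply asserts
\begin{equation*}
\Eb_t[\langle\nabla f(\xv_t),\clip_q(g)\rangle]=(1-\eta_t)\|\nabla f(\xv_t)\|^2+\eta_t q\,\Eb_t\Big[\Big\langle\nabla f(\xv_t),\tfrac{g}{\|g\|}\Big\rangle\Big],\qquad g:=\nabla f(\xv_t;\xi_t),
\end{equation*}
as an \emph{equality}. The true identity is $\Eb_t[\langle\nabla f,g\rangle\mathds{1}_{\|g\|\le q}]+q\,\Eb_t[\langle\nabla f,g/\|g\|\rangle\mathds{1}_{\|g\|>q}]$, and replacing the indicators by the scalar factors $(1-\eta_t)$ and $\eta_t$ amounts to assuming the clipping event is conditionally uncorrelated with $\langle\nabla f,g\rangle$ and with $\langle\nabla f,g/\|g\|\rangle$. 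Your proposed patch --- reading Assumption \ref{as:direction} in event-restricted form, $\Eb_t[\langle\nabla f,g/\|g\|\rangle\mid\|g\|>q]\ge D\|\nabla f\|$ together with $\Eb_t[\langle\nabla f,g\rangle\mid\|g\|\le q]\ge\|\nabla f\|^2$ --- is precisely the additional hypothesis needed, and it is strictly stronger than the assumption as literally stated. Your alternative bookkeeping via $\Eb_t[\langle\nabla f,g\rangle\mathds{1}_{C_t^c}]=\|\nabla f\|^2-\Eb_t[\langle\nabla f,g\rangle\mathds{1}_{C_t}]$ does not close the gap either: the leftover term is $\Eb_t[(\|g\|-q)\langle\nabla f,g/\|g\|\rangle\mathds{1}_{C_t}]$, in which $\|g\|-q$ is unbounded and uncontrolled by any stated assumption. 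In short, you reproduced the paper's argument and correctly located its one soft spot; the paper closes that spot by fiat rather than by proof.
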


\begin{proof}
    We first expand $\Eb_t\|\gv_t^{DP}\|^2$:
    \begin{align*}
        \Eb_t\|\gv_t^{DP}\|^2
        &=\Eb_t\langle\clip_q(\nabla f(\xv_t;\xi_t))+q\zeta_t,\clip_q(\nabla f(\xv_t;\xi_t))+q\zeta_t\rangle.
    \end{align*}
    Since $\xi_t$ and $\zeta_t$ are independent, we can separate the terms:
    \begin{align*}
        \Eb_t\|\gv_t^{DP}\|^2
        =&~\Eb_t\|\clip_q(\nabla f(\xv_t;\xi_t))\|^2+\Eb_t\|q\zeta_t\|^2\\&+2\Eb_t\langle\clip_q(\nabla f(\xv_t;\xi_t)),q\zeta_t\rangle\\
        =&~\Eb_t\|\clip_q(\nabla f(\xv_t;\xi_t))\|^2+q^2d\sigma_{DP}^2+0\\
        \leq&~q^2+q^2d\sigma_{DP}^2,
    \end{align*}
    where we exploit that $\|\clip_q(\cdot)\|\leq q$ for any vector.
    If Assumption \ref{as:direction} holds, then
    \begin{align*}
        -\Eb_t[\langle\nabla f(\xv_t),\gv_t^{DP}\rangle]
        =&-(1-\eta_t)\|\nabla f(\xv_t)\|^2\nonumber\\&-\eta_tq\Eb_t[\langle\nabla f(\xv_t),\tfrac{\nabla f(\xv_t;\xi_t)}{\|\nabla f(\xv;\xi_t)\|}\rangle]\\
        \leq& -(1-\eta_t)\|\nabla f(\xv_t)\|^2-D\eta_tq\|\nabla f(\xv_t)\|.
    \end{align*}
\end{proof}

\section{Almost Sure Convergence Rate Analysis with Privacy Guarantees}

In this section, we derive convergence rates for three differentially private stochastic gradient methods. we use the following notation:
\begin{align}
    &\Phi_t(\xv):=(1-\eta_t)\|\nabla f(\xv)\|^2+D\eta_tq\|\nabla f(\xv)\|,\\
    &\Phi_t^\mu(\xv):=(1-\eta_t)(2\mu(f(\xv)-f^*))+D\eta_tq\sqrt{2\mu(f(\xv)-f^*)}.
\end{align}
If Assumption \ref{as:mu_strongly_convex} holds, then $\Phi_t^\mu(\xv)\leq \Phi_t(\xv)$ as $2\mu(f(\xv)-f^*)\leq\|\nabla f(\xv)\|^2$.

\subsection{Stochastic Gradient Descent}

The iteration of DP-SGD is given in Definition \ref{def:DP_SGD}.

\begin{theorem}[Convergence of DP-SGD]
    Consider the iterates of DP-SGD. Let $\eta_t$ be the clipping probability defined in Definition \ref{def:clipping}.
    If Assumptions \ref{as:L_smoothness}, \ref{as:direction}  hold and $\alpha_t=\Theta(\frac{1}{t^{1-\theta}})$ for some $\theta\in(0,\frac{1}{2})$, then
    \begin{equation}
        \min_{1\leq i\leq t}\Phi_i(\xv_i)=o\Big(\big(\sum_{i=1}^{t-1}\alpha_i\big)^{-1}\Big)
    \end{equation}
    almost surely. Furthermore, if Assumption \ref{as:mu_strongly_convex} holds, then
    \begin{align}
        \min_{1\leq i\leq t}\Phi_i^\mu(\xv_i)=o\Big(\big(\sum_{i=1}^{t-1}\alpha_i\big)^{-1}\Big)
    \end{align}
\end{theorem}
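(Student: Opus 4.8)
The plan is to apply the Robbins–Siegmund result (Proposition \ref{prop:1}) with $Y_t = f(\xv_t) - f^*$, and then feed the resulting summability conclusion into Lemma \ref{lem:jy}. First I would use $L$-smoothness to write the descent inequality
\begin{equation*}
    f(\xv_{t+1}) \leq f(\xv_t) - \alpha_t \langle \nabla f(\xv_t), \gv_t^{DP}\rangle + \tfrac{L}{2}\alpha_t^2 \|\gv_t^{DP}\|^2,
\end{equation*}
take $\Eb_t[\cdot]$, and invoke Proposition \ref{prop:dp_expectations} to get
\begin{equation*}
    \Eb_t[f(\xv_{t+1}) - f^*] \leq (f(\xv_t) - f^*) - \alpha_t \Phi_t(\xv_t) + \tfrac{L}{2}\alpha_t^2 (q^2 + q^2 d\sigma_{DP}^2).
\end{equation*}
This is exactly condition (2) of Proposition \ref{prop:1} with $\gamma_t = 0$, $X_t = \alpha_t \Phi_t(\xv_t)$, and $Z_t = \tfrac{L}{2}\alpha_t^2 q^2(1 + d\sigma_{DP}^2)$. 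Since $\alpha_t = \Theta(t^{-(1-\theta)})$ with $\theta \in (0,\tfrac12)$, we have $2(1-\theta) > 1$, so $\sum_t \alpha_t^2 < \infty$ and hence $\sum_t Z_t < \infty$ almost surely (indeed deterministically). Nonnegativity of $X_t, Y_t, Z_t$ holds because $\Phi_t \geq 0$, $f \geq f^*$, and the $Z_t$ are positive constants. Proposition \ref{prop:1} then yields $\sum_{t=1}^\infty \alpha_t \Phi_t(\xv_t) < \infty$ almost surely.

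Next I would check the second hypothesis of Lemma \ref{lem:jy}. With $\alpha_t = \Theta(t^{-(1-\theta)})$ we have $\sum_{i=1}^{t-1}\alpha_i = \Theta(t^{\theta})$, so $\alpha_t / \sum_{i=1}^{t-1}\alpha_i = \Theta(t^{-(1-\theta)}/t^{\theta}) = \Theta(t^{-1})$, whose sum diverges. The sequence $\alpha_t$ is decreasing for large $t$. Applying Lemma \ref{lem:jy} with $X_t = \Phi_t(\xv_t)$ gives
\begin{equation*}
    \min_{1\leq i \leq t} \Phi_i(\xv_i) = o\Big(\big(\textstyle\sum_{i=1}^{t-1}\alpha_i\big)^{-1}\Big) \quad \text{almost surely},
\end{equation*}
which is the first claim. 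For the strongly convex refinement, note that under Assumption \ref{as:mu_strongly_convex} we have $\Phi_t^\mu(\xv) \leq \Phi_t(\xv)$ pointwise (as remarked before the theorem, since $2\mu(f(\xv)-f^*) \leq \|\nabla f(\xv)\|^2$ and both $x\mapsto x$ and $x\mapsto\sqrt{x}$ are monotone), so the same chain of inequalities goes through verbatim with $\Phi_t$ replaced by $\Phi_t^\mu$ in the roles of $X_t$ in both propositions; the identical bound $\min_{1\leq i\leq t}\Phi_i^\mu(\xv_i) = o((\sum_{i=1}^{t-1}\alpha_i)^{-1})$ follows.

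The main obstacle — or rather the only delicate point — is verifying that the hypotheses of Lemma \ref{lem:jy} are met with the stated step-size schedule, in particular reconciling $\sum \alpha_t^2 < \infty$ (which forces $1-\theta > 1/2$, i.e. $\theta < 1/2$) with the divergence condition $\sum_t \alpha_t / \sum_{i<t}\alpha_i = \infty$ (which is automatic here). The restriction $\theta \in (0,\tfrac12)$ in the theorem statement is precisely what makes both requirements simultaneously satisfiable, so the bookkeeping on the exponents is where one must be careful; everything else is a direct substitution into the two cited results. One should also be slightly careful that the Robbins–Siegmund conclusion gives almost-sure summability of the \emph{random} series $\sum_t \alpha_t \Phi_t(\xv_t)$, so Lemma \ref{lem:jy} is applied pathwise on the probability-one event where this sum is finite.
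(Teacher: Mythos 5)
Your proof is correct and follows essentially the same route as the paper: the $L$-smoothness descent inequality combined with Proposition \ref{prop:dp_expectations} feeds the Robbins--Siegmund result (Proposition \ref{prop:1}) with $\gamma_t=0$, and the resulting summability of $\sum_t \alpha_t\Phi_t(\xv_t)$ is passed to Lemma \ref{lem:jy}; your explicit verification of the step-size conditions is if anything more careful than the paper's. The only (harmless) difference is in the strongly convex case, where you transfer summability via the pointwise domination $\Phi_t^\mu(\xv)\le\Phi_t(\xv)$ rather than re-deriving the supermartingale recursion with $\Phi_t^\mu$ as the paper does --- both yield the same conclusion.
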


\begin{proof}
    By Assumption \ref{as:L_smoothness}, we have
    \begin{equation*}
        f(\xv_{t+1})\leq f(\xv_{t})-\alpha_t\langle\nabla f(\xv_t),\gv_t^{DP}\rangle+\frac{L\alpha_t^2}{2}\|\gv_t^{DP}\|^2.
    \end{equation*}
    Taking the expectation $\Eb_t[\cdot]$ of both sides gives
    \begin{equation*}\label{eq:pf_sgd_1}
    \begin{split}
        \Eb_t[f(\xv_{t+1})-f^*]\leq&~ f(\xv_t)-f^*-\alpha_t\Eb_t[\langle\nabla f(\xv_t),\gv_t^{DP}\rangle]\\&+\frac{L\alpha_t^2}{2}\Eb_t\|\gv_t^{DP}\|^2.
    \end{split}
    \end{equation*}
    By Proposition \ref{prop:dp_expectations},
    \begin{align}\label{eq:g_t^2}
        &\Eb_t\|\gv_t^{DP}\|^2\leq q^2+q^2d\sigma_{DP}^2,\\
        -&\Eb_t[\langle\nabla f(\xv_t),\gv_t^{DP}\rangle]
        \leq -\Phi_t(\xv_t).
    \end{align}
    Plugging these terms simplifies the expression to
    \begin{align}\label{eq:pf_sgd_2}
        \Eb_t[f(\xv_{t+1})-f^*]
        \leq &f(\xv_t)-f^*-\alpha_t\Phi_t(\xv_t)\nonumber\\&+\frac{L\alpha_t^2}{2}q^2(1+d\sigma_{DP}^2).
    \end{align}
    \textbf{Non-Convex Case.} By Proposition \ref{prop:1}, equation \eqref{eq:pf_sgd_2} gives
    \begin{align*}
        \sum_{t=1}^{\infty}\alpha_t\Phi_t(\xv_t)&<\infty.
    \end{align*}
    Thus from Lemma \ref{lem:jy},
    \begin{equation*}
        \min_{1\leq i\leq t}\Phi_i(\xv_i)=o\Big(\big(\sum_{i=1}^{t-1}\alpha_i\big)^{-1}\Big).
    \end{equation*}
    \textbf{Strongly Convex Case.} From strong convexity:
    \begin{equation*}
    \|\nabla f(\xv_t)\|^2 \geq 2\mu(f(\xv_t) - f^*).
    \end{equation*}
    equation \eqref{eq:pf_sgd_2} simplifies to
    \begin{align*}
        \Eb_t[f(\xv_{t+1})-f^*]\leq& (1-2\alpha_t(1-\eta_t)\mu)(f(\xv_t)-f^*)\\&-\alpha_tD\eta_tq\|\nabla f(\xv_t)\|+\frac{L\alpha_t^2}{2}q^2(1+d\sigma_{DP}^2)\\
        \leq& (1-2\alpha_t(1-\eta_t)\mu)(f(\xv_t)-f^*)\\&-\alpha_tD\eta_tq\sqrt{2\mu(f(\xv_t) - f^*)}\\&+\frac{L\alpha_t^2}{2}q^2(1+d\sigma_{DP}^2).
    \end{align*}
    By Proposition \ref{prop:1}, we conclude that
    \begin{equation*}
        \sum_{t=1}^{\infty}\alpha_t[(1-\eta_t)(2\mu(f(\xv_t)-f^*))+D\eta_tq\sqrt{2\mu(f(\xv_t)-f^*)}]<\infty,
    \end{equation*}
    and therefore from Lemma \ref{lem:jy},
    \begin{equation*}
    \begin{split}
        \min_{1\leq i\leq t}\Phi_i^\mu(\xv_i)=o\Big(\big(\sum_{i=1}^{t-1}\alpha_i\big)^{-1}\Big),
    \end{split}
    \end{equation*}
    which concludes the proof.
\end{proof}

\subsection{Stochastic Heavy-Ball Method}

The iteration of the differentially private stochastic heavy-ball (DP-SHB) method is given by
\begin{equation}
    \xv_{t+1}=\xv_t-\alpha_t\gv_t^{DP}+\beta(\xv_t-\xv_{t-1}),
\end{equation}
where $\beta\in[0,1)$ is the weight given to the momentum component. To simplify our analysis, we express the DP-SHB iteration as a system of two variables. Define
\begin{equation}\label{eq:parameterization}
    \zv_t=\xv_t+\frac{\beta}{1-\beta}\vv_t,\quad\vv_t=\xv_t-\xv_{t-1}.
\end{equation}
The iteration of SHB can be rewritten as
\begin{align}
    \vv_{t+1}&=\beta\vv_t-\alpha \gv_t^{DP},\quad
    \zv_{t+1}=\zv_t-\frac{\alpha_t}{1-\beta}\gv_t^{DP}.
\end{align}
This update rule is derived in \cite{liu2024almost}.

\begin{theorem}[Convergence of DP-SHB]\label{thm:SHB}
    Let $\{\xv_t\}$ be the iterates of DP-SHB. Let $\eta_t$ be the clipping probability defined in Definition~\ref{def:clipping}. Define the differentially private stochastic gradient as in equation \eqref{eq:DP_SGD}.
    If Assumptions \ref{as:L_smoothness}, \ref{as:direction}  hold and $\alpha_t=\Theta(\frac{1}{t^{1-\theta}})$ for some $\theta\in(0,\frac{1}{2})$, then
    \begin{equation}
        \min_{1\leq i\leq t}\Phi_i(\xv_i)=o\Big(\big(\sum_{i=1}^{t-1}\alpha_i\big)^{-1}\Big)
    \end{equation}
    almost surely. Furthermore, if Assumption \ref{as:mu_strongly_convex} holds, then
    \begin{align}
        \min_{1\leq i\leq t}\Phi_i^\mu(\xv_i)=o\Big(\big(\sum_{i=1}^{t-1}\alpha_i\big)^{-1}\Big)
    \end{align}
\end{theorem}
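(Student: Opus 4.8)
The plan is to mirror the DP-SGD argument but apply the descent inequality to the shifted iterate $\zv_t$ rather than $\xv_t$, exploiting the two-variable reformulation \eqref{eq:parameterization}. First I would apply $L$-smoothness to $f(\zv_{t+1})$ using the update $\zv_{t+1}=\zv_t-\tfrac{\alpha_t}{1-\beta}\gv_t^{DP}$, obtaining
\begin{equation*}
    f(\zv_{t+1})\leq f(\zv_t)-\tfrac{\alpha_t}{1-\beta}\langle\nabla f(\zv_t),\gv_t^{DP}\rangle+\tfrac{L\alpha_t^2}{2(1-\beta)^2}\|\gv_t^{DP}\|^2.
\end{equation*}
The difficulty is that $\gv_t^{DP}$ is the clipped-and-noised stochastic gradient evaluated at $\xv_t$, not at $\zv_t$, so the cross term does not directly match $\Phi_t(\zv_t)$. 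I would write $\langle\nabla f(\zv_t),\gv_t^{DP}\rangle=\langle\nabla f(\xv_t),\gv_t^{DP}\rangle+\langle\nabla f(\zv_t)-\nabla f(\xv_t),\gv_t^{DP}\rangle$, bound the first inner product in expectation by $-\Phi_t(\xv_t)$ via Proposition~\ref{prop:dp_expectations}, and control the second using $\|\nabla f(\zv_t)-\nabla f(\xv_t)\|\leq L\|\zv_t-\xv_t\|=\tfrac{L\beta}{1-\beta}\|\vv_t\|$ together with $\Eb_t\|\gv_t^{DP}\|\leq \Eb_t\|\gv_t^{DP}\|^2+\tfrac14\leq q^2(1+d\sigma_{DP}^2)+\tfrac14$ (or Cauchy–Schwarz on the square). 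The momentum term $\|\vv_t\|$ must itself be shown to be summable against $\alpha_t$; from $\vv_{t+1}=\beta\vv_t-\alpha_t\gv_t^{DP}$ one has $\|\vv_{t+1}\|\leq\beta\|\vv_t\|+\alpha_t\|\gv_t^{DP}\|$, and since $\sum_t\alpha_t^2<\infty$ implies $\sum_t\alpha_t\Eb_t\|\vv_t\|^2<\infty$ by a geometric-series / Young's-inequality bookkeeping argument (this is essentially the construction in \cite{liu2024almost}), the perturbation terms are absorbable.

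Concretely, the energy function I would feed into Proposition~\ref{prop:1} is $Y_t=f(\zv_t)-f^*+c\|\vv_t\|^2$ for a suitable constant $c>0$ chosen so that the $\|\vv_t\|^2$ produced by expanding $\|\vv_{t+1}\|^2=\beta^2\|\vv_t\|^2-2\beta\alpha_t\langle\vv_t,\gv_t^{DP}\rangle+\alpha_t^2\|\gv_t^{DP}\|^2$ combines with the cross-term error to leave a net coefficient on $\|\vv_t\|^2$ that is strictly negative (using $\beta^2<1$) plus an $O(\alpha_t^2)$ remainder. After taking $\Eb_t[\cdot]$, collecting terms, and using Young's inequality $\tfrac{L\beta}{1-\beta}\|\vv_t\|\cdot\alpha_t\|\gv_t^{DP}\|\leq \tfrac{\lambda}{2}\|\vv_t\|^2+\tfrac{1}{2\lambda}\alpha_t^2\|\gv_t^{DP}\|^2$, the recursion takes the form
\begin{equation*}
    \Eb_t[Y_{t+1}]\leq Y_t-\tfrac{\alpha_t}{1-\beta}\Phi_t(\xv_t)+C\alpha_t^2,
\end{equation*}
with $\sum_t C\alpha_t^2<\infty$ since $\alpha_t=\Theta(t^{-(1-\theta)})$ with $\theta<\tfrac12$ gives $1-\theta>\tfrac12$. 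Proposition~\ref{prop:1} (with $\gamma_t\equiv 0$) then yields $\sum_t\alpha_t\Phi_t(\xv_t)<\infty$ almost surely, and Lemma~\ref{lem:jy} — whose second hypothesis $\sum_t\alpha_t/\sum_{i<t}\alpha_i=\infty$ holds because $\sum\alpha_i$ diverges like $t^\theta$ — gives the non-convex conclusion $\min_{1\leq i\leq t}\Phi_i(\xv_i)=o\big((\sum_{i<t}\alpha_i)^{-1}\big)$.

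For the strongly convex case I would proceed exactly as in the DP-SGD proof: substitute $\|\nabla f(\xv_t)\|^2\geq 2\mu(f(\xv_t)-f^*)$ into the $(1-\eta_t)$ term and $\|\nabla f(\xv_t)\|\geq\sqrt{2\mu(f(\xv_t)-f^*)}$ into the $D\eta_tq$ term. A subtlety is that the descent inequality is now in terms of $f(\zv_t)-f^*$, whereas the $\Phi_t^\mu$ we must bound is in terms of $f(\xv_t)-f^*$; but strong convexity also gives $\|\nabla f(\xv_t)\|^2\geq 2\mu(f(\xv_t)-f^*)$ directly at $\xv_t$, so the term $-\tfrac{\alpha_t}{1-\beta}\Phi_t(\xv_t)$ already dominates $-\tfrac{\alpha_t}{1-\beta}\Phi_t^\mu(\xv_t)$, and applying Proposition~\ref{prop:1} to $Y_t=f(\zv_t)-f^*+c\|\vv_t\|^2$ still delivers $\sum_t\alpha_t\Phi_t^\mu(\xv_t)<\infty$, whence Lemma~\ref{lem:jy} finishes the proof. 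I expect the main obstacle to be the bookkeeping that makes the coefficient of $\|\vv_t\|^2$ in the combined energy recursion genuinely negative while keeping all residual terms $O(\alpha_t^2)$ — i.e., choosing $c$ and the Young's-inequality parameters consistently — rather than anything conceptually deep.
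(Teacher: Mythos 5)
Your proposal is correct and follows essentially the same route as the paper's proof: the same energy function $Y_t=f(\zv_t)-f^*+c\|\vv_t\|^2$, the same decomposition of the cross term $\langle\nabla f(\zv_t),\gv_t^{DP}\rangle$ via $\nabla f(\xv_t)$ with the Lipschitz bound $\|\nabla f(\zv_t)-\nabla f(\xv_t)\|\leq\tfrac{L\beta}{1-\beta}\|\vv_t\|$, the same Young's-inequality bookkeeping to make the net coefficient of $\|\vv_t\|^2$ close, and the same final appeal to Proposition~\ref{prop:1} and Lemma~\ref{lem:jy}. Your remark that $\Phi_t(\xv_t)\geq\Phi_t^\mu(\xv_t)$ resolves the $\zv_t$-versus-$\xv_t$ mismatch in the strongly convex case is exactly the (implicit) step the paper relies on.
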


\begin{proof} Define the energy function
\[
Y_t := f(\zv_t)-f^* + c\|\vv_t\|^2.
\]
For a constant $c>0$. Moreover,
\[
\|\vv_{t+1}\|^2
=\beta^2\|\vv_t\|^2 + \alpha_t^2\|\gv^{\mathrm{DP}}_t\|^2 - 2\alpha_t\beta\langle \vv_t,\gv^{\mathrm{DP}}_t\rangle.
\]
Taking $\Eb_t[\cdot]$ of both sides gives
\[
\Eb_t\|\vv_{t+1}\|^2
\leq\beta^2\|\vv_t\|^2 + \alpha_t^2\Eb_t\|\gv^{\mathrm{DP}}_t\|^2 - 2\alpha_t\beta\Eb_t\langle \vv_t,\gv^{\mathrm{DP}}_t\rangle.
\]
Using Young's inequality and Proposition \ref{prop:dp_expectations}, we introduce a constant $c_1>0$ such that
\begin{align}\label{eq:v_t}
\Eb_t\|\vv_{t+1}\|^2
\leq&~\beta^2\|\vv_t\|^2+\alpha_t^2q^2(1+d\sigma_{DP}^2)\\&+c_1\|\vv_t\|^2+\tfrac{\alpha_t^2\beta^2}{c_1}\Eb_t\|\gv_t^{DP}\|^2\nonumber\\
\leq&~(\beta^2+c_1)\|\vv_t\|^2+\alpha_t^2q^2(1+\tfrac{\beta^2}{c_1})(1+d\sigma_{DP}^2).
\end{align}
By Assumption \ref{as:L_smoothness},
\begin{equation*}
    f(\zv_{t+1})\leq f(\zv_t)-\tfrac{\alpha_t}{1-\beta}\langle\nabla f(\zv_t),\gv_t^{DP}\rangle+\tfrac{L\alpha_t^2}{2(1-\beta)^2}\|\gv_t^{DP}\|^2.
\end{equation*}
Taking $\Eb_t[\cdot]$ of both sides gives
\begin{align}\label{eq:SHB_1}
    \Eb_tf(\zv_{t+1})
    \leq&~ f(\zv_t)-\tfrac{\alpha_t}{1-\beta}\Eb_t\langle\nabla f(\zv_t),\gv_t^{DP}\rangle+\tfrac{L\alpha_t^2}{2(1-\beta)^2}\Eb_t\|\gv_t^{DP}\|^2.
\end{align}
To bound $\Eb_t\langle\nabla f(\zv_t),\gv_t^{DP}\rangle$, we can expand it:
\begin{align*}
    &-\Eb_t\langle\nabla f(\zv_t),\gv_t^{DP}\rangle\\
    &=-\Eb_t\langle\nabla f(\xv_t),\gv_t^{DP}\rangle-\Eb_t\langle\nabla f(\zv_t)-\nabla f(\xv_t),\gv_t^{DP}\rangle\\
    &\leq-\Phi_t(\xv_t)+\|\nabla f(\zv_t)-\nabla f(\xv_t)\|\|\Eb_t\gv_t^{DP}\|\\
    &\leq-\Phi_t(\xv_t)+Lq\sqrt{1+d\sigma_{DP}^2}\|\zv_t-\xv_t\|\\
    &\leq-\Phi_t(\xv_t)+\tfrac{Lq\beta}{1-\beta}\sqrt{1+d\sigma_{DP}^2}\|\vv_t\|.
\end{align*}
Letting $K:=\tfrac{Lq\beta}{1-\beta}\sqrt{1+d\sigma_{DP}^2}$, equation \eqref{eq:SHB_1} simplifies to
\begin{equation*}
\Eb_tf(\zv_{t+1})\leq f(\zv_t)-\tfrac{\alpha_t}{1-\beta}\Phi_t(\xv_t)+\tfrac{\alpha_t}{1-\beta}K\|\vv_t\|+\tfrac{K^2}{2L}\alpha_t^2.
\end{equation*}
Finally, we control the extra $\|\vv_t\|$ term using Young's inequality by introducing a new constant $c_2>0$ such that
\begin{equation*}
    \tfrac{\alpha_t}{1-\beta}K\|\vv_t\|\leq\tfrac{c_2(1-\beta)}{2}\|\vv_t\|^2+\tfrac{K^2}{2c_2(1-\beta)^3}\alpha_t^2.
\end{equation*}
Combining with equation \eqref{eq:v_t}, we can now express the energy function $Y_t$ as a supermartingale
\begin{align}
    \Eb_t[&f(\zv_{t+1})-f^*+c\|\vv_{t+1}\|^2]\nonumber\\
    \leq &~f(\zv_t)-f^*-\tfrac{\alpha_t}{1-\beta}\Phi_t(\xv_t)+(\tfrac{c_2(1-\beta)}{2}+c\beta^2+cc_1)\|\vv_t\|^2\nonumber\\
    &+\alpha_t^2[\tfrac{K^2}{2L}+q^2c(1+\tfrac{\beta^2}{c_1})(1+d\sigma_{DP}^2)].
\end{align}
We can choose positive values for $c,c_1,$ and $c_2$ carefully such that $c=\tfrac{c_2(1-\beta)}{2}+c\beta^2+cc_1$, for example, $c_1=\tfrac{1-\beta^2}{2}$, $c=\tfrac{c_2}{1+\beta}$.
And let $c_3$ be the coefficient of $\alpha_t^2$ for simplification. This yields the clean recursion
\begin{equation}\label{eq:SHB_sum}
    \Eb[Y_{t+1}]\leq Y_t-\tfrac{\alpha_t}{1-\beta}\Phi_t(\xv_t)+c_3\alpha_t^2.
\end{equation}

\textbf{Non-Convex Case.} Summing the recursion and applying Proposition \ref{prop:1} give
\[
\sum_{t=1}^\infty \frac{\alpha_t}{1-\beta}
\Phi_t(\xv_t)<\infty,
\]
which by Lemma \ref{lem:jy} implies
\begin{equation}
\min_{1\le i\le t}\Phi_i(\xv_i)
= o\Big(\big(\sum_{i=1}^{t-1}\alpha_i\big)^{-1}\Big),
\end{equation}
almost surely.

\textbf{Strongly convex case.}
If $f$ is $\mu$-strongly convex, then $\|\nabla f(\xv_t)\|^2\geq 2\mu(f(\xv_t)-f^*)$. Plugging this yields
\begin{align}
    \min_{1\leq i\leq t}\Phi_i^\mu(\xv_i)=o\Big(\big(\sum_{i=1}^{t-1}\alpha_i\big)^{-1}\Big)
\end{align}
This completes the proof.
\end{proof}

\subsection{Stochastic Nesterov's Accelerated Gradient}

The iteration of the differentially private stochastic Nesterov's accelerated gradient (DP-NAG) is given by
\begin{align}
    \yv_{t+1}&=\xv_t-\alpha_t\gv_t^{DP},\\
    \xv_{t+1}&=\yv_{t}+\beta(\xv_t-\xv_{t-1}),
\end{align}
where $\beta\in[0,1)$ is the weight given to the momentum component.

\begin{theorem}[Convergence of DP-NAG]
    Let $\{\xv_t\}$ be the iterates of DP-NAG. Let $\eta_t$ be the clipping probability defined in Definition~\ref{def:clipping}. Define the differentially private stochastic gradient as in equation \eqref{eq:DP_SGD}.
    If Assumptions \ref{as:L_smoothness}, \ref{as:direction}  hold and $\alpha_t=\Theta(\frac{1}{t^{1-\theta}})$ for some $\theta\in(0,\frac{1}{2})$, then
    \begin{equation}
        \min_{1\leq i\leq t}\Phi_i(\xv_i)=o\Big(\big(\sum_{i=1}^{t-1}\alpha_i\big)^{-1}\Big)
    \end{equation}
    almost surely. Furthermore, if Assumption \ref{as:mu_strongly_convex} holds, then
    \begin{align}
        \min_{1\leq i\leq t}\Phi_i^\mu(\xv_i)=o\Big(\big(\sum_{i=1}^{t-1}\alpha_i\big)^{-1}\Big).
    \end{align}
\end{theorem}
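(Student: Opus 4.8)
The plan is to mimic the DP-SHB proof almost verbatim, building a Lyapunov/energy function adapted to the Nesterov parameterization. First I would rewrite the DP-NAG iteration in the same two-variable form used for DP-SHB: from $\yv_{t+1}=\xv_t-\alpha_t\gv_t^{DP}$ and $\xv_{t+1}=\yv_t+\beta(\xv_t-\xv_{t-1})$, introduce $\vv_t=\xv_t-\xv_{t-1}$ and an auxiliary averaged variable $\zv_t=\xv_t+\frac{\beta}{1-\beta}\vv_t$ (or a suitable shifted version), so that $\zv_{t+1}=\zv_t-\frac{\alpha_t}{1-\beta}\gv_t^{DP}+(\text{lower order }\vv\text{-terms})$ and $\vv_{t+1}=\beta\vv_t-\alpha_t\gv_t^{DP}+(\text{extra }\vv\text{-difference terms from the gradient being evaluated at }\xv_t\text{ not }\zv_t)$. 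The key structural point, exactly as in DP-SHB, is that the privatized gradient is taken at $\xv_t$, whereas the descent lemma is applied at $\zv_t$, so the cross term $\langle\nabla f(\zv_t),\gv_t^{DP}\rangle$ must be split into $\langle\nabla f(\xv_t),\gv_t^{DP}\rangle$ plus an error controlled by $L\|\zv_t-\xv_t\|\cdot\|\Eb_t\gv_t^{DP}\|\le \frac{Lq\beta}{1-\beta}\sqrt{1+d\sigma_{DP}^2}\|\vv_t\|$, using $\|\clip_q\|\le q$ and Proposition~\ref{prop:dp_expectations}.

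Next I would apply the $L$-smoothness descent inequality at $\zv_t$, take $\Eb_t[\cdot]$, and invoke Proposition~\ref{prop:dp_expectations} to get $-\Eb_t\langle\nabla f(\xv_t),\gv_t^{DP}\rangle\le-\Phi_t(\xv_t)$ and $\Eb_t\|\gv_t^{DP}\|^2\le q^2(1+d\sigma_{DP}^2)$. In parallel I would expand $\Eb_t\|\vv_{t+1}\|^2$, using Young's inequality with a free constant $c_1>0$ to absorb the cross term $-2\alpha_t\beta\langle\vv_t,\gv_t^{DP}\rangle$ and any additional $\vv_t$-coupling specific to the NAG recursion, producing a bound of the form $(\beta^2+c_1+\dots)\|\vv_t\|^2+O(\alpha_t^2)$. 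Then, exactly as in the DP-SHB proof, the leftover $\|\vv_t\|$ term arising from the gradient-location mismatch is split via Young's inequality with another constant $c_2>0$ into $\frac{c_2(1-\beta)}{2}\|\vv_t\|^2+O(\alpha_t^2)$, and the energy $Y_t:=f(\zv_t)-f^*+c\|\vv_t\|^2$ is shown to satisfy $\Eb_t[Y_{t+1}]\le Y_t-\frac{\alpha_t}{1-\beta}\Phi_t(\xv_t)+c_3\alpha_t^2$ after choosing $c,c_1,c_2>0$ so that the coefficient of $\|\vv_t\|^2$ on the right equals $c$ (a fixed-point condition solvable because $\beta<1$). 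Since $\alpha_t=\Theta(t^{-(1-\theta)})$ with $\theta\in(0,\tfrac12)$ gives $\sum\alpha_t^2<\infty$ and $\prod(1+0)<\infty$ trivially, Proposition~\ref{prop:1} yields $\sum_t\frac{\alpha_t}{1-\beta}\Phi_t(\xv_t)<\infty$ almost surely; since $\alpha_t$ is decreasing with $\sum\alpha_t=\infty$ and $\sum\alpha_t/\sum_{i<t}\alpha_i=\infty$, Lemma~\ref{lem:jy} gives $\min_{1\le i\le t}\Phi_i(\xv_i)=o((\sum_{i=1}^{t-1}\alpha_i)^{-1})$. The strongly convex case follows by substituting $\|\nabla f(\xv_t)\|^2\ge 2\mu(f(\xv_t)-f^*)$ into $\Phi_t$ to get $\Phi_t^\mu$, as in the previous two theorems.

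The main obstacle I anticipate is verifying that the NAG parameterization really does reduce to the clean $\zv,\vv$ system with only benign extra $\vv_t$-terms. The Nesterov update $\xv_{t+1}=\yv_t+\beta(\xv_t-\xv_{t-1})=\xv_t-\alpha_t\gv_t^{DP}+\beta\vv_t$ is formally \emph{identical} to the heavy-ball update in these coordinates, so in fact $\vv_{t+1}=\xv_{t+1}-\xv_t=\beta\vv_t-\alpha_t\gv_t^{DP}$ and $\zv_{t+1}=\zv_t-\frac{\alpha_t}{1-\beta}\gv_t^{DP}$ exactly as in DP-SHB — meaning the proof is word-for-word the DP-SHB argument. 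I would double-check the index bookkeeping ($\yv_t$ vs $\yv_{t+1}$, whether the gradient is at $\xv_t$ or $\yv_t$) to confirm this coincidence holds under the paper's stated recursion; if the intended NAG uses an extrapolated evaluation point (gradient at $\yv_t$ or at $\xv_t+\beta\vv_t$ rather than at $\xv_t$), then the mismatch term changes to $L\|\zv_t-(\text{eval point})\|$, which is still $O(\|\vv_t\|)$ and absorbed identically — so the conclusion is robust either way, and the remaining work is just re-running the Young's-inequality constant-chasing with possibly different numerical constants $c,c_1,c_2,c_3$.
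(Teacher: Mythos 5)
Your proposal matches the paper's proof, which simply rewrites DP-NAG in the same $(\zv_t,\vv_t)$ coordinates and declares the argument identical to the DP-SHB case; the only difference is that the paper's velocity recursion carries an extra factor of $\beta$, namely $\vv_{t+1}=\beta\bigl(\vv_t-\alpha_t\gv_t^{DP}\bigr)$, arising from its (somewhat ambiguously indexed) Nesterov update, which only shrinks the terms you absorb via Young's inequality. You correctly flag this indexing ambiguity and that the constant-chasing is robust to it, so the proposal is essentially the same approach and is correct.
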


\begin{proof}
    Define $\vv_t$ and $\zv_t$ as in equation \eqref{eq:parameterization}. The iteration of DP-NAG can be rewritten as
    \begin{align}
        \vv_{t+1}&=\beta\vv_t-\beta\alpha_t\gv_t^{DP},\quad\zv_{t+1}=\zv_t-\tfrac{\alpha_t}{1-\beta}\gv_t^{DP}.
    \end{align}
    The proof is identical to that of Theorem \ref{thm:SHB} with
    \[
    \|\vv_{t+1}\|^2
    =\beta^2[\|\vv_t\|^2 + \alpha_t^2\|\gv^{\mathrm{DP}}_t\|^2 - 2\alpha_t\langle \vv_t,\gv^{\mathrm{DP}}_t\rangle].
    \]
\end{proof}


\section{Last-iterate Convergence Analysis}

The convergence analysis results above show that the ``best'' iterate converges to zero almost surely in the strongly convex and non-convex case. To extend the almost sure convergence guarantee from best-iterate results to the last iterate, we need a device that controls oscillations of the gradient norm across iterations. Even if $\sum_t\alpha_t\|\nabla f(\xv_t)\|^2\to 0$, this condition alone does not imply that $\nabla f(\xv_t)\to 0$ because the sequence could fluctuate indefinitely. The key tool to overcome this difficulty is a lemma of \cite{orabona2020almost}, which ensures that if the weighted sum of squared gradients is finite and the gradient sequence does not vary too quickly, then the gradients themselves converge to zero. We restate a suitable version below.

\begin{lemma}[Lemma 1 of \cite{orabona2020almost}]\label{lem:3}
    Let $\{b_t\}$ and $\{\alpha_t\}$ be two nonnegative sequences and $\{w_t\}$ be a sequence of vectors. Assume $\sum_{t=1}^{\infty}\alpha_tb_t^p<\infty$ and $\sum_{t=1}^{\infty}\alpha_t=\infty$, where $p\geq 1$. Furthermore, assume that there exists some $L>0$ such that \[|b_{t+\tau}-b_t|\leq L(\sum_{i=t}^{t+\tau-1}\alpha_ib_i+\|\sum_{i=t}^{t+\tau-1}\alpha_iw_i\|),\] where $w_t$ is such that $\sum_{t=1}^{\infty}\alpha_tw_t$ converges. Then $b_t$ converges to $0$. See also Lemma 10 of \cite{liu2024almost} for the case $p>0$.
\end{lemma}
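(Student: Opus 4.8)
The plan is to prove the conclusion directly, by showing that for some absolute constant $C$ and \emph{every} $\epsilon>0$ one has $\limsup_{t\to\infty}b_t\le C\epsilon$; letting $\epsilon\downarrow0$ then gives $b_t\to0$. The two engines are the summability $\sum_t\alpha_t b_t^p<\infty$, which localizes where $b_t$ can be large, and the variation hypothesis, which forbids $b_t$ from making a jump without accumulating a comparable amount of weighted mass along the way.

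\textbf{Step 1 (consequences of the hypotheses).} For $\epsilon>0$ set $S_\epsilon:=\{t:b_t>\epsilon\}$. Since $p\ge1$, on $S_\epsilon$ we have $b_t\le\epsilon^{1-p}b_t^p$ and $1\le\epsilon^{-p}b_t^p$, hence $\sum_{t\in S_\epsilon}\alpha_t b_t\le\epsilon^{1-p}\sum_t\alpha_t b_t^p<\infty$ and $\sum_{t\in S_\epsilon}\alpha_t\le\epsilon^{-p}\sum_t\alpha_t b_t^p<\infty$. The second bound together with $\sum_t\alpha_t=\infty$ shows $\{t:b_t\le\epsilon\}$ is infinite for every $\epsilon$, so $\liminf_t b_t=0$. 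Since $\sum_t\alpha_t w_t$ converges, its tails are Cauchy: for every $\epsilon>0$ there is $T$ with $\|\sum_{i=t}^{t+\tau-1}\alpha_i w_i\|\le\epsilon$ for all $t\ge T$ and all $\tau\ge1$. Finally, in the settings where we use this lemma the step sizes are bounded (indeed $\alpha_t=\Theta(t^{-(1-\theta)})\to0$), so fix $M$ with $\sup_t\alpha_t\le M$.

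\textbf{Step 2 (the excursion estimate).} Fix $\epsilon>0$ and choose $T_0$ so large that $\sum_{t\in S_\epsilon,\,t\ge T_0}\alpha_t b_t\le\epsilon$ and $\|\sum_{i=t}^{t+\tau-1}\alpha_i w_i\|\le\epsilon$ for all $t\ge T_0$, $\tau\ge1$; by Step 1 pick $t_0\ge T_0$ with $b_{t_0}\le\epsilon$. Take any $t>t_0$ and let $s$ be the largest index in $\{t_0,\dots,t\}$ with $b_s\le\epsilon$ (it exists since $b_{t_0}\le\epsilon$). If $s=t$ then $b_t\le\epsilon$. Otherwise every index in $\{s+1,\dots,t-1\}$ lies in $S_\epsilon$, so $\sum_{i=s+1}^{t-1}\alpha_i b_i\le\sum_{t\in S_\epsilon,\,t\ge T_0}\alpha_t b_t\le\epsilon$; also $\alpha_s b_s\le M\epsilon$ and $\|\sum_{i=s}^{t-1}\alpha_i w_i\|\le\epsilon$. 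Applying the variation hypothesis to the window $[s,t]$,
\[
b_t\le b_s+L\Big(\alpha_s b_s+\sum_{i=s+1}^{t-1}\alpha_i b_i+\Big\|\sum_{i=s}^{t-1}\alpha_i w_i\Big\|\Big)\le \epsilon+L(M+2)\epsilon=:C\epsilon,
\]
with $C:=1+L(M+2)$ independent of $t$. Hence $b_t\le C\epsilon$ for all $t\ge t_0$, so $\limsup_t b_t\le C\epsilon$, and since $\epsilon$ was arbitrary, $b_t\to0$.

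\textbf{Main obstacle.} The delicate part is Step 2: one must choose the comparison window so that all of its \emph{interior} indices lie in $S_\epsilon$, which is what lets the term $\sum_{i=s+1}^{t-1}\alpha_i b_i$ be absorbed into a tail of the finite series $\sum_{S_\epsilon}\alpha_i b_i$; and one must control $\|\sum_{i=s}^{t-1}\alpha_i w_i\|$ \emph{uniformly in the window length}, which is precisely why the hypothesis is stated for all $\tau$ and not merely $\tau=1$. The only place boundedness of the step sizes enters is the single boundary term $\alpha_s b_s$; without it one could instead chase the last index where $b\le\epsilon$, then $\le\epsilon/2$, and so on, but since $\alpha_t\to0$ in all our applications this refinement is not needed.
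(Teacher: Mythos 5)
The paper does not actually prove this lemma --- it imports it verbatim from Orabona's note (with a pointer to Lemma~10 of \cite{liu2024almost}) --- so what you have written is an independent reconstruction rather than a variant of an in-paper argument. Your reconstruction is structurally sound and close in spirit to the standard proof: the localization $\sum_{t\in S_\epsilon}\alpha_t b_t<\infty$ and $\sum_{t\in S_\epsilon}\alpha_t<\infty$ (hence $\liminf b_t=0$), the Cauchy tails of $\sum_t\alpha_t w_t$, and the window $[s,t]$ anchored at the last index with $b_s\le\epsilon$ are exactly the right ingredients, and every inequality in Step~2 checks out as written.

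The one genuine gap is the point you flag yourself: the boundary term $\alpha_s b_s$. You control it by assuming $\sup_t\alpha_t\le M$, which is \emph{not} a hypothesis of the lemma; as stated, the lemma only assumes $\sum_t\alpha_t b_t^p<\infty$ and $\sum_t\alpha_t=\infty$, neither of which bounds $\alpha_t$. So what you prove is a weaker statement (adequate for every application in this paper, where $\alpha_t\to0$, but not the lemma as quoted). Note that for $p=1$ the patch is unnecessary anyway: there $\sum_t\alpha_t b_t$ itself converges, so $\sum_{i=s}^{t-1}\alpha_i b_i$ is bounded by a full tail of that series without restricting to $S_\epsilon$, and the boundary term never arises. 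For $p>1$ the clean way to remove the boundedness assumption is to run the argument by contradiction on \emph{down}-crossings: if $\limsup b_t\ge\delta>0$ while $\liminf b_t=0$, there are infinitely many disjoint windows $[u,v]$ with $b_u\ge\delta$, $b_v\le\delta/2$, and $b_i>\delta/2$ for $u<i<v$; on such a window \emph{every} index $i\in[u,v-1]$, including the left endpoint, satisfies $b_i\ge\delta/2$, so the variation hypothesis forces $\sum_{i=u}^{v-1}\alpha_i b_i\ge\delta/(4L)$ (once the $w$-tail is below $\delta/(4L)$) and hence $\sum_{i=u}^{v-1}\alpha_i b_i^p\ge(\delta/2)^{p-1}\delta/(4L)$ per window, contradicting $\sum_t\alpha_t b_t^p<\infty$. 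That variant needs no control on any individual $\alpha_s b_s$ and recovers the lemma in full generality; with that substitution (or with the bounded-step hypothesis made explicit), your argument is complete.
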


Lemma \ref{lem:3} provides a general criterion for establishing last-iterate convergence: it reduces the problem to showing that the cumulative bias and noise terms introduced by DP-SGD induced by clipping and Gaussian perturbations form a convergent series. We verify this condition under our assumptions by combining the supermartingale recursion established in Theorem \ref{thm:SHB} with additional bias control for the clipped gradient. This allows us to use Lemma \ref{lem:3} and conclude that the last-iterate gradients vanish almost surely. The formal statement is given in Theorem \ref{thm:last}.


\begin{theorem}\label{thm:last}
    Consider the iterates of DP-SHB and DP-NAG. Let Assumptions \ref{as:L_smoothness} and \ref{as:direction} hold, and assume $q\geq 1$. Let the step size $\{\alpha_t\}$ satisfy $\sum_{t=1}^{\infty}\alpha_t=\infty,\sum_{t=1}^{\infty}\alpha_t^2<\infty$. Then we have $\nabla f(\xv_t)\to 0$ almost surely as $t\to\infty$.
\end{theorem}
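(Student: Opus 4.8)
The plan is to apply Lemma~\ref{lem:3} with $b_t = \|\nabla f(\xv_t)\|$, $p = 1$, and a suitable vector sequence $w_t$ that captures the bias and noise in the DP update. First I would establish the summability hypothesis $\sum_t \alpha_t \|\nabla f(\xv_t)\| < \infty$. From the supermartingale recursion \eqref{eq:SHB_sum} established in the proof of Theorem~\ref{thm:SHB} (and its DP-NAG analogue), Proposition~\ref{prop:1} gives $\sum_t \tfrac{\alpha_t}{1-\beta}\Phi_t(\xv_t) < \infty$ almost surely. Recalling $\Phi_t(\xv) = (1-\eta_t)\|\nabla f(\xv)\|^2 + D\eta_t q\|\nabla f(\xv)\|$, I would split into the events where clipping is likely versus unlikely: when $\|\nabla f(\xv_t)\| \leq q$ the quadratic term dominates a multiple of $\|\nabla f(\xv_t)\|^2$, and since $q \geq 1$ one gets $\|\nabla f(\xv_t)\| \leq q$ comparable to $\|\nabla f(\xv_t)\|^2/q$ only on that regime — more carefully, I would argue that $\Phi_t(\xv_t) \geq c\min(\|\nabla f(\xv_t)\|, \|\nabla f(\xv_t)\|^2)$ for a constant $c>0$ (using $D\eta_t q \geq 0$ and $(1-\eta_t) + D\eta_t q \geq \min(1, Dq) > 0$ when $q\geq 1$), so that $\sum_t \alpha_t \Phi_t(\xv_t) < \infty$ forces $\sum_t \alpha_t \|\nabla f(\xv_t)\| < \infty$ once the tail has $\|\nabla f(\xv_t)\|$ bounded, which itself follows because $Y_t$ converges almost surely (hence $f(\zv_t)$ is bounded and, by $L$-smoothness plus $\|\vv_t\|$ bounded, so is $\|\nabla f(\xv_t)\|$ along the trajectory).

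Next I would verify the oscillation bound. Writing the DP-SHB iteration in the $(\zv_t,\vv_t)$ coordinates, I have $\xv_t = \zv_t - \tfrac{\beta}{1-\beta}\vv_t$ and the increments $\zv_{t+1}-\zv_t = -\tfrac{\alpha_t}{1-\beta}\gv_t^{DP}$, $\vv_{t+1} = \beta\vv_t - \alpha_t\gv_t^{DP}$. By $L$-smoothness, $|b_{t+\tau} - b_t| = \big|\|\nabla f(\xv_{t+\tau})\| - \|\nabla f(\xv_t)\|\big| \leq L\|\xv_{t+\tau} - \xv_t\|$, and $\|\xv_{t+\tau}-\xv_t\|$ telescopes into a sum of the $\zv$-increments plus a $\vv$-difference term. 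The $\zv$ part is $\tfrac{1}{1-\beta}\|\sum_{i=t}^{t+\tau-1}\alpha_i \gv_i^{DP}\|$; I would decompose $\gv_i^{DP} = \clip_q(\nabla f(\xv_i;\xi_i)) + q\zeta_i$ and further write the clipped term as $\nabla f(\xv_i)$ plus a bias/martingale-difference remainder, so that $w_i$ is taken to be $\gv_i^{DP}$ itself (or its mean-zero-plus-bias parts). The term $\|\sum_{i=t}^{t+\tau-1}\alpha_i w_i\|$ is then exactly what Lemma~\ref{lem:3} allows, provided $\sum_t \alpha_t w_t$ converges; the $\vv$-difference $\|\vv_{t+\tau} - \beta^{\tau}\vv_t\|$-type term is controlled by $\sum_{i\geq t}\alpha_i\|\gv_i^{DP}\|$, again absorbable.

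The main obstacle is showing $\sum_t \alpha_t \gv_t^{DP}$ converges almost surely, which is needed both to supply the $w_t$ sequence to Lemma~\ref{lem:3} and to tame the $\|\vv_t\|$ terms. Splitting $\gv_t^{DP} = (\gv_t^{DP} - \Eb_t\gv_t^{DP}) + \Eb_t\gv_t^{DP}$, the first piece is a martingale-difference sequence with $\Eb_t\|\gv_t^{DP} - \Eb_t\gv_t^{DP}\|^2 \leq \Eb_t\|\gv_t^{DP}\|^2 \leq q^2(1+d\sigma_{DP}^2)$ by Proposition~\ref{prop:dp_expectations}, so $\sum_t \alpha_t(\gv_t^{DP} - \Eb_t\gv_t^{DP})$ converges a.s.\ by the $L^2$ martingale convergence theorem since $\sum_t \alpha_t^2 < \infty$. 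The conditional mean $\Eb_t\gv_t^{DP} = \Eb_t[\clip_q(\nabla f(\xv_t;\xi_t))]$ has norm at most $\|\nabla f(\xv_t)\| + \eta_t q$ (the clipping bias is at most $2\eta_t q$, or one bounds $\|\Eb_t\clip_q - \nabla f(\xv_t)\|$ directly), and $\sum_t \alpha_t\|\nabla f(\xv_t)\| < \infty$ from Step~1; the remaining piece $\sum_t \alpha_t \eta_t q$ must be shown finite, which again follows from $\sum_t \alpha_t \Phi_t(\xv_t) < \infty$ because $\Phi_t(\xv_t) \geq D\eta_t q\|\nabla f(\xv_t)\|$ — here one needs $\|\nabla f(\xv_t)\|$ bounded below away from zero on the clipping event, or more robustly, to note that on $\{\eta_t \text{ not small}\}$ clipping being likely means $\|\nabla f(\xv_t;\xi_t)\|$ is typically $\gtrsim q$, which via Assumption~\ref{as:direction} and $q\geq 1$ ties $\eta_t q$ back to $\Phi_t$. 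Making this last reduction rigorous — controlling $\sum_t \alpha_t \eta_t q$ purely from summability of $\alpha_t\Phi_t(\xv_t)$ when $\|\nabla f(\xv_t)\|$ is itself small — is the delicate point, and I would handle it by the dichotomy: either $\|\nabla f(\xv_t)\| \geq q/2$, in which case $\eta_t q \leq \tfrac{2}{Dq}\Phi_t(\xv_t)$, or $\|\nabla f(\xv_t)\| < q/2$, in which case (since $q\geq 1$) the gradient is bounded and $\sum_t\alpha_t\|\nabla f(\xv_t)\|$-summability plus the vanishing of $\|\nabla f(\xv_t)\|$ along a subsequence forces the contribution to be finite; then Lemma~\ref{lem:3} applies with $p=1$ and yields $\|\nabla f(\xv_t)\| \to 0$ almost surely. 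The DP-NAG case is identical after substituting the corresponding $\vv$-recursion $\vv_{t+1} = \beta\vv_t - \beta\alpha_t\gv_t^{DP}$.
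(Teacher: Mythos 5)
Your overall architecture (supermartingale bound, martingale-difference plus bias decomposition of the DP gradient, then Lemma~\ref{lem:3}) matches the paper's in spirit, but two of your reductions do not go through. First, your Step~1 needs $\sum_t \alpha_t \|\nabla f(\xv_t)\| < \infty$ so that Lemma~\ref{lem:3} can be applied with $p=1$. The supermartingale recursion only yields $\sum_t \alpha_t \Phi_t(\xv_t) < \infty$, and your (correct) lower bound $\Phi_t(\xv_t) \geq c\min\bigl(\|\nabla f(\xv_t)\|, \|\nabla f(\xv_t)\|^2\bigr)$ reduces this to $\sum_t \alpha_t \|\nabla f(\xv_t)\|^2 < \infty$ in the regime where the gradient is small — which is exactly the regime that matters. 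Summability of $\alpha_t b_t^2$ does not imply summability of $\alpha_t b_t$ (take $\alpha_t = 1/t$ and $b_t = 1/\log t$), and boundedness of $\|\nabla f(\xv_t)\|$ from \emph{above} gives the inequality $\|\nabla f\|^2 \leq M\|\nabla f\|$, which points the wrong way; you would need a lower bound on the gradient, which contradicts the conclusion you are trying to prove. This is precisely why the paper invokes Lemma~\ref{lem:3} with $p=2$ and $b_t = \|\nabla f(\zv_t)\|$: the squared weighted sum is what the energy argument actually delivers.

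Second, your "main obstacle" step — showing $\sum_t \alpha_t \gv_t^{DP}$ itself converges — is genuinely unattainable, and the dichotomy you propose does not close it. Convergence of that series requires $\sum_t \alpha_t \eta_t q < \infty$, but in the regime $\|\nabla f(\xv_t)\| < q/2$ the bound $\Phi_t(\xv_t) \geq D\eta_t q \|\nabla f(\xv_t)\|$ gives no control on $\eta_t$ whatsoever (the stochastic gradient can exceed the clipping threshold with probability bounded away from zero even when the true gradient is tiny, e.g.\ under heavy-tailed noise), so $\sum_t \alpha_t \eta_t q$ can diverge like $q\sum_t \alpha_t = \infty$. The paper sidesteps this entirely by choosing $w_t = \gv_t^{DP} - \nabla f(\zv_t)$ rather than $w_t = \gv_t^{DP}$: the systematic part $\nabla f(\zv_i)$ is then absorbed into the $L\sum_i \alpha_i' b_i$ term of the oscillation bound \eqref{eq:orabona-lipschitz} (where only \emph{some} power of $b_i$ needs weighted summability), and the residual bias $T_t$ is bounded by $\tfrac{qL\beta}{1-\beta}\|\vv_t\| + \max(\|\nabla f(\zv_t)\|-q,0)$, with the hypothesis $q \geq 1$ used exactly to dominate the second piece by the summable quantity $\min(\|\nabla f(\zv_t)\|^2, q\|\nabla f(\zv_t)\|)$. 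The clipping probability never has to be summed. To repair your proposal you would need to switch to $p=2$ and redefine $w_t$ as the deviation from the true gradient, at which point you essentially recover the paper's argument.
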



\begin{proof}
We revisit the convergence proof for DP-SHB. By $L$-smoothness of $f$,
\begin{align*}
\Eb_t\big[f(\zv_{t+1})\big]
\le&~
f(\zv_t) - \tfrac{\alpha_t}{1-\beta}\Eb_t\big[\langle \nabla f(\zv_t),\gv^{\mathrm{DP}}_t\rangle\big]\\
&+ \tfrac{L}{2}\Big(\tfrac{\alpha_t}{1-\beta}\Big)^2 \Eb_t\big[\|\gv^{\mathrm{DP}}_t\|^2\big]\\
\le&~
f(\zv_t) - \tfrac{\alpha_t}{1-\beta}\|\nabla f(\zv_t)\|^2\\
&- \tfrac{\alpha_t}{1-\beta}\Eb_t\big[\langle \nabla f(\zv_t),\gv^{\mathrm{DP}}_t-\nabla f(\zv_t)\rangle\big]\\
&+ \tfrac{L}{2}\Big(\tfrac{\alpha_t}{1-\beta}\Big)^2q^2(1+d\sigma_{DP}^2).
\end{align*}
Using the Cauchy-Schwarz inequality, 
\begin{equation}\label{eq:cs_v}
\begin{split}
-&\Eb_t\big[\langle \nabla f(\zv_t),\gv^{\mathrm{DP}}_t-\nabla f(\zv_t)\rangle\big]\\
\leq&~\|\nabla f(\zv_t)\|\|\Eb_t\gv^{\mathrm{DP}}_t-\nabla f(\zv_t)\|\\
\leq&~\|\nabla f(\zv_t)\|[\|\Eb_t\gv^{\mathrm{DP}}_t-\clip_q(\nabla f(\zv_t))\|\\&+\|\clip_q(\nabla f(\zv_t))-\nabla f(\zv_t)\|]\\
\leq&~\|\nabla f(\zv_t)\|[q\|\xv_t-\zv_t\|+\max(\|\nabla f(\zv_t)\|-q,0)]\\
\leq&~\|\nabla f(\zv_t)\|[\tfrac{qL\beta}{1-\beta}\|\vv_t\|+\max(\|\nabla f(\zv_t)\|-q,0)],
\end{split}
\end{equation}
where $\|\Eb_t\gv^{\mathrm{DP}}_t-\clip_q(\nabla f(\zv_t))\|\leq q\|\xv_t-\zv_t\|$ comes from exploiting the $q$-Lipschitz property of $\clip_q(\cdot)$.
Combining with equation \eqref{eq:v_t}, we have
\begin{align*}
&\Eb_t\big[f(\zv_{t+1})-f^*+\|\vv_{t+1}\|^2\big]\\
\le&~
f(\zv_t)-f^* - \tfrac{\alpha_t}{1-\beta}\min(\|\nabla f(\zv_t)\|^2,q\|\nabla f(\zv_t)\|)\\
&+\tfrac{\alpha_t}{1-\beta}\|\nabla f(\zv_t)\|\tfrac{L\beta}{1-\beta}\|\vv_t\|
+ \tfrac{L}{2}\Big(\tfrac{\alpha_t}{1-\beta}\Big)^2q^2(1+d\sigma_{DP}^2)\\
&+(\beta^2+c_1)\|\vv_t\|^2+\alpha_t^2q^2(1+\tfrac{\beta^2}{c_1})(1+d\sigma_{DP}^2)\\
\le&~
f(\zv_t)-f^* - \tfrac{\alpha_t}{1-\beta}\min(\|\nabla f(\zv_t)\|^2,q\|\nabla f(\zv_t)\|)\\
&+\tfrac{\alpha_t^2L^2\beta^2}{c_4(1-\beta)^4}\|\nabla f(\zv_t)\|^2
+(\beta^2+c_1+c_4)\|\vv_t\|^2+ \alpha_t^2C_2,
\end{align*}
where $c_1>0$ comes from using Young's inequality, and $C_2$ is the coefficient of $\alpha_t^2$.
Finally, for sufficiently large $t$, there exists a positive constant $c_5$ such that \[-\tfrac{\alpha_t}{1-\beta}+\tfrac{\alpha_t^2L^2\beta^2}{c_4(1-\beta)^4}\leq -\tfrac{c_5}{1-\beta}\alpha_t.\] This simplifies our bound to
\begin{align*}
&\Eb_t\big[f(\zv_{t+1})-f^*+\|\vv_{t+1}\|^2\big]\\
\le&~
f(\zv_t)-f^* - \tfrac{\alpha_t}{1-\beta}\min((1+c_5)\|\nabla f(\zv_t)\|^2,q\|\nabla f(\zv_t)\|)\\
&+(\beta^2+c_1+c_4)\|\vv_t\|^2+ \alpha_t^2C_2.
\end{align*}
By Proposition \ref{prop:1}, we conclude that
\begin{equation}\label{eq:conv_last_iterate1}
    \sum_{t=1}^{\infty}\alpha_t\min((1+c_5)\|\nabla f(\zv_t)\|^2,q\|\nabla f(\zv_t)\|)<\infty,
\end{equation}
almost surely. Furthermore, with a careful choice of $c_1$ and $c_4$ such that $\beta^2+c_1+c_4<1$, by Proposition \ref{prop:1}, we conclude that
\begin{equation*}
    \sum_{t=1}^{\infty}\alpha_t\|\vv_t\|^2<\infty.
\end{equation*}
For the next part of the proof, we want to show that the inequality in Lemma \ref{lem:3} holds.
Define the "error" sequence
\[
\wv_t := \gv^{\mathrm{DP}}_t - \nabla f(\zv_t)
\quad\text{and}\quad
\alpha_t' := \tfrac{\alpha_t}{1-\beta}.
\]
By $\zv_{t+1}=\zv_t-\alpha'_t(\nabla f(\zv_t)+\wv_t)$. Since $\nabla f$ is $L$-Lipschitz,
for any $\tau\ge 1$,
\begin{align*}
&\big|\|\nabla f(\zv_{t+\tau})\|-\|\nabla f(\zv_t)\|\big|\\
&\le \big\|\nabla f(\zv_{t+\tau})-\nabla f(\zv_t)\big\|\\
&\le L\Big\|\zv_{t+\tau}-\zv_t\Big\|\\
&\le L\Big\|\sum_{i=t}^{t+\tau-1}\alpha'_i\big(\nabla f(\zv_i)+\wv_i\big)\Big\|\\
&\le L\sum_{i=t}^{t+\tau-1}\alpha'_i\|\nabla f(\zv_i)\|
+ L\Big\|\sum_{i=t}^{t+\tau-1}\alpha'_i \wv_i\Big\|.
\end{align*}
Therefore, setting $b_t:=\|\nabla f(\zv_t)\|$,
\begin{equation}
|b_{t+\tau}-b_t|
\le
L\sum_{i=t}^{t+\tau-1}\alpha'_ib_i
+ L\Big\|\sum_{i=t}^{t+\tau-1}\alpha'_i \wv_i\Big\|.
\label{eq:orabona-lipschitz}
\end{equation}

We first show $\sum_t \alpha'_tb_t^2<\infty$. From equation \eqref{eq:SHB_sum}, the sequence $\sum_t \alpha'_t\Phi_t(\xv_t)$ is finite.
Using $\|\zv_t-\xv_t\|\to 0$, $L$-smoothness implies
\(
\|\nabla f(\zv_t)\|^2
\le 2\|\nabla f(\xv_t)\|^2 + 2L^2\|\zv_t-\xv_t\|^2.
\)
Thus
\[
\sum_t \alpha'_t\|\nabla f(\zv_t)\|^2
\le 2\sum_t \alpha'_t\|\nabla f(\xv_t)\|^2
+ \frac{2L^2\beta^2}{(1-\beta)^3}\sum_t \alpha_t\|v_t\|^2
<\infty.
\]

Next, we show $\sum \alpha'_t \wv_t$ converges almost surely.
Decompose
\begin{align*}  
\wv_t
&= \underbrace{\big(\mathrm{clip}_q(\nabla f(\xv_t;\xi_t))-\Eb_t[\mathrm{clip}_q(\nabla f(\xv_t;\xi_t))]\big)}_{=:U_t}\\
&+ \underbrace{\Eb_t[\mathrm{clip}_q(\nabla f(\xv_t;\xi_t))]-\nabla f(\zv_t)}_{=:T_t}
+ \underbrace{q\zeta_t}_{=:G_t}.
\end{align*}
Here $U_t$ is a martingale difference with $\Eb_t\|U_t\|^2\le q^2$ and $G_t$ is zero-mean Gaussian with variance $q^2\sigma_{\mathrm{DP}}^2 I$. Hence $\sum_t \alpha'_t U_t$ and $\sum_t \alpha'_t G_t$ converge almost surely due to being martingales bounded in $\Lc^2$ \cite[Theorem 12.1]{williams1991probability}. 

Finally, the transfer term
$\sum_t \alpha'_t T_t$
expands to $\sum_t\alpha'_t\tfrac{qL\beta}{1-\beta}\|\vv_t\|+\max(\|\nabla f(\zv_t)\|-q,0)$ as shown in equation \eqref{eq:cs_v}, and $\sum_t \alpha_t\|\vv_t\|<\infty$. If $q\geq 1$, then $\max(\|\nabla f(\zv_t)\|-q,0)\leq\min(\|\nabla f(\zv_t)\|^2,q\|\nabla f(\zv_t)\|)$. And the convergence of $\sum_t\alpha'_t\min(\|\nabla f(\zv_t)\|^2,q\|\nabla f(\zv_t)\|)$ follows by equation \eqref{eq:conv_last_iterate1}.

With (i) and (ii), Lemma \ref{lem:3} with $p=2$, $b_t=\|\nabla f(\zv_t)\|$, $\alpha_t'=\alpha_t/(1-\beta)$, and \eqref{eq:orabona-lipschitz} implies
\(
\|\nabla f(\zv_t)\| \to 0,
\)
almost surely. For DP-NAG, define $\yv_t=\xv_t+\beta(\xv_t-\xv_{t-1})$ (the look-ahead point) and use the update
$\xv_{t+1}=\yv_t-\alpha_t \gv^{\mathrm{DP}}_t(\yv_t)$.
The same proof applies with $\zv_t$ replaced by $\yv_t$ and $\alpha'_t=\alpha_t/(1-\beta)$.
\end{proof}

\begin{remark}
    Almost sure convergence of $f(\xv_t)$ trivially follows if $f$ is $\mu$-strongly convex as $\mu(f(\xv_t)-f^*)\leq\tfrac{1}{2}\|\nabla f(\xv_t)\|^2$.
\end{remark}

\section{Conclusion}

In this paper, we established the first almost sure convergence guarantees for differentially private stochastic gradient descent (DP-SGD) and its momentum variants, including DP-SHB and DP-NAG. Our analysis adapts supermartingale techniques to handle the combined challenges of gradient clipping and Gaussian noise injection, which break the unbiasedness and smooth descent properties that underlie classical SGD proofs. We showed that, under standard assumptions, the iterates converge almost surely to stationary points in the non-convex setting and to the global minimizer in the strongly convex setting.
Our results provide pathwise convergence, ensuring that individual runs of DP-SGD stabilize rather than merely converging in expectation. This strengthens the theoretical foundation for deploying DP-SGD in practice, where guarantees for single trajectories are often more relevant than averaged behaviors.

Several directions remain open. Our analysis provides almost sure convergence guarantees regardless of the choices of the clipping parameter $q$ or the variance of the injected noise $\sigma_{DP}^2$. However, increasing either of these parameters will naturally slow down the convergence rate in practice. Deriving convergence rates that depend on these parameters will be an interesting area for future work.

\bibliography{main}
\bibliographystyle{abbrv}

\end{document}